\newtheorem{definition}{Definition}[section]
\newtheorem{proposition}{Proposition}[section]
\newtheorem{remark}{Remark}[section]
\DeclareMathOperator*{\argmin}{argmin}
\DeclareMathOperator*{\card}{count}
\title{Generalized Quantile Loss for Deep Neural Networks}
\author{
  Dvir Ben Or~~~~~Michael Kolomenkin~~~~~Gil Shabat \thanks{gils@playtika.com}\\ 
  Playtika AI Research, Israel
}
\date{}
\begin{document}
\maketitle

\begin{abstract}
This note presents a simple way to add a count (or quantile) constraint to a regression neural net, such that given $n$ samples in the training set it guarantees that the prediction of $m<n$ samples will be larger than the actual value (the label). Unlike standard quantile regression networks, the presented method can be applied to any loss function and not necessarily to the standard quantile regression loss, which minimizes the mean absolute differences. Since this count constraint has zero gradients almost everywhere, it cannot be optimized using standard gradient descent methods. To overcome this problem, an alternation scheme, which is based on standard neural network optimization procedures, is presented with some theoretical analysis.
\end{abstract}


\section{Introduction}
\label{sec:introduction}
In many applications, it is often required to predict the conditional probability rather than the conditional mean. Among those applications one can find electricity consumption forecasting \cite{he2019electricity}, short term power load forecasting \cite{he2016short} and financial returns \cite{taylor2000quantile} to name some. 

Perhaps the most known tool for tackling those problems is the quantile regression \cite{koenker2001quantile}. While regular least squares minimization estimates the conditional mean, quantile regression estimates the median or any other quantile. Formally, the model parameters are derived by optimizing:

\begin{equation}
\label{eq:quantreg}
    \hat{\beta}=\underset{\beta \in \mathbb{R}^d}{\mbox{argmin}}  \left[(\tau - 1)\sum_{y_{i}<x_i^T\beta}(y_{i}-x_i^T\beta)+\tau\sum_{y_{i}\geq x_i^T\beta}(y_{i}-x_i^T\beta) \right]
\end{equation}
where $\{x_i\}_{i=1}^n \in \mathbb{R}^d$ are the data points, $\{y_i\}_{i=1}^n \in \mathbb{R}$ are the labels, $0 < \tau <1$ is the desired quantile and $\beta$ are the model parameters to be determined.
Recently, it was suggested to optimize the quantile loss using more complex models and to utilize the quantile loss to estimate uncertainty of neural networks \cite{rodrigues2020beyond}. In this case, $x_i^T\beta$ is replaced by some neural network $N\left(x_i;\mathbf{\Theta}\right)$ with parameters $\mathbf{\Theta}$ and Eq. \ref{eq:quantreg} is used as the loss function, usually reformulated a little bit different.

One of the limitations of general quantile-based predictions is that they are ill-posed, since there is an infinite number of possibilities for fitting a curve to pass between the data points in a way that a certain number of points is above the curve and the rest of the points are below the curve. As for the quantile loss of quantile regression, it minimizes the mean of absolute differences. For example, for predicting the conditional probability of the $50\%$ quantile, subtituting $\tau=0.5$ leads to 
\begin{equation}
    \hat{\mathbf{\Theta}}=\underset{\beta \in \mathbb{R}^d}{\mbox{argmin}}\sum_{i=1}^n \vert y_i-N\left(x_i;\mathbf{\Theta}\right) \vert.
    \label{eq:minmad}
\end{equation}
Eq. \ref{eq:minmad} is a standard $l_1$ loss function for minimizing the mean absolute differences (MAD). Therefore, it passes a manifold that separates between the dataset, so half of the dataset is above that manifold and half is it, in such a way that the sum of $l_1$ distances between the manifold and the data points will be minimal. Changing the formulation such that the goal will be to minimize a different loss function (other than $l_1$) subject to a quantile constraint is not straightforward. In this note, we present a computational flow that generalizes the quantile regression for neural networks, such that the loss function can be almost any loss function and yet to be able to add constraint on the quantile of the model.
 
\section{Problem Formulation}
Given a regression loss function of a neural network, and a number (or percentile) of samples, the suggested method optimizes the neural net to minimize the loss function such that only a specified fraction of samples will be above the predicted regression value of the neural network. Formally, given $n$ samples, $x_i \in \mathbb{R}^d$, $i=1,\ldots,n$ with their corresponding values, $y_i \in \mathbb{R}$ and a number $m \le n$, the algorithm find weights $\Theta$ in order to optimize:

%
\begin{align}
\label{eq:problem}
    &\mbox{minimize}~~~\mathcal{L}\left(N(X,\mathbf{\Theta});Y\right) \\
    &\mbox{such that: ~~count}_i\left(\hat{y}_i \ge y_i\right) = m \nonumber
\end{align}

where $N(X;\mathbf{\Theta})$ is the output of the neural network with parameters $\mathbf{\Theta}$ given input data $X$. The predicted value is denoted by $\hat{y}$, i.e. $\hat{y} \triangleq N(x_i;\mathbf{\Theta})$.
The \emph{count} function returns the amount of times the condition inside holds and can be defined via the indicator function, i.e.
\begin{equation*}
    \card_i\left(a_i>b_i\right) = \sum_i \mathbb{1}_{a_i > b_i}
\end{equation*}
One of the challenges Eq. \ref{eq:problem} holds is dealing with the count function, which has no gradients and cannot be optimized directly using standard gradient methods.
An example that can be given as a special case of Eq. \ref{eq:problem} is to minimize the mean-squared-error (MSE) over $n$ samples, such that the error of $10\%$ of the predicted values will be above the real values:

\begin{align*}
    &\mbox{minimize}~~~\sum_{i=1}^n \left(N\left(x_i;\mathbf{\Theta}\right)-y_i\right)^2 \\
    &\mbox{such that: ~~} \mbox{count}_i \left(N(x_i;\mathbf{\Theta}) \ge y_i\right) = 0.1n
\end{align*}

\begin{remark}
The accuracy for satisfying the count-constraint is implemented up to some tolerance $\delta$, so that the constraint in Eq. \ref{eq:problem} is now $\vert \mbox{count}_i\left(\hat{y}_i \ge y_i\right)-m \vert \le \delta$
\end{remark}
\begin{align*}
    &\mbox{minimize}~~~f\left(N(X;\mathbf{\Theta}),Y\right) \\
    &\mbox{such that: ~~} \vert \mbox{count}_i\left(\hat{y}_i \ge y_i\right) - m \vert  \le \delta
\end{align*}

\subsection{Description of the Algorithm}
The algorithm consists of optimizing Eq. \ref{eq:problem} using alternations. The alternations can be viewed as a non-linear and non-orthogonal projection operators. The first type of alternation moves from the current weights $\Theta$ to the closest set of weights $\hat{\Theta}$ which is a local minima of the loss function. This operator is denoted by $\mathcal{P_M}$.

\begin{definition}
\label{def:ProjMin}
Given a training dataset $\mathbf{X}$, 
a neural net $N(\boldsymbol{\Theta}, \mathbf{X})$ with weights $\boldsymbol{\Theta}$ and a loss function $\mathcal{L}(\Theta,X;Y)$ with a set of local minima $\mathcal{M}$, then  $\mathcal{P_M}\mathbf{\Theta}$ returns weights that are the nearest local minimum:
\begin{equation}
    \mathcal{P_M}\mathbf{\Theta} =  \argmin_{\mathcal{L}(N(\boldsymbol{\hat{\mathbf{\Theta}},X}); \mathbf{Y}) \in \mathcal{M}} \Vert \mathbf{\Theta} - \hat{\mathbf{\Theta}} \Vert_2
\end{equation}
\end{definition}

\begin{definition}
\label{def:SetComp}
Let $\mathcal{C}$ be the set of all possible weights of a loss function $\mathcal{L}(N(\boldsymbol{\Theta},X);Y)$, such that the predictions of the neural net,  $\{\hat{y_i}\}_{i=1}^n$, will be above the real values $\{y_i\}_{i=1}^n$ for $m$ points in the training dataset, up to tolerance $\delta$. Formally:
\begin{equation}
\mathcal{C} = \Bigl\{\boldsymbol{\Theta} \vert ~~~ \vert  \mbox{count}_i\left(N(\boldsymbol{\Theta}, x_i) \geq y_i \right)  - m \vert \le \delta\Bigr\}  
\end{equation}
\end{definition}

\begin{definition}
\label{def:ProjComp}
Given a training dataset $\mathbf{X}$, 
a neural net $N(\boldsymbol{\Theta}, \mathbf{X})$ with weights $\boldsymbol{\Theta}$ and a set of valid count-constraint points $\mathcal{C}$ (Def. \ref{def:SetComp}), then $\mathcal{P_C} (\Theta)$ returns the closest weights in $\mathcal{C}$:
\end{definition}
\begin{equation}
    \mathcal{P_C}(\boldsymbol{\Theta}) =  \argmin_{\boldsymbol{\hat{\Theta}} \in \mathcal{C}} \Vert \mathbf{\Theta} - \hat{\mathbf{\Theta}} \Vert_2
\end{equation}

\begin{remark}
Note that $\mathcal{P_C}$ does not depend on the loss function $\mathcal{L}$
\end{remark}

Both $\mathcal{P_C}$ and $\mathcal{P_M}$ are implemented using stochastic gradient optimizers (specifically, in this paper Adam optimizer was used). The implementation of $\mathcal{P_M}$ is a standard neural network optimization. $\mathcal{P_C}$ is implemented by drifting iteratively from the current point $\Theta$ to a valid point $\mathcal{P_C}\Theta \in \mathcal{C}$, such that if $\card_i\left(\hat{y_i}>y_i\right)$ is too large, $\Theta$ is moved against the direction of the gradient of $N(\boldsymbol{\hat{\Theta}}, \mathbf{X})$ to reduce the count value or with the direction of the gradient to increase the count function, if the count value is too small. 

In practice, implementation of the above operators such that they return the nearest minimum is impossible in general, since it depends on the data, the architecture of the network and the loss function, which is typically a high-dimensional non-convex manifold. However, since the operators are implemented using stochastic gradient descent (or other optimizers), it is likely to assume that the weights returned by the operators are close (probably among the closest) to the point the operator started from.

Optimizing the loss function subject to a count constraint, can be done by the following alternating scheme, which is approximately implemented by Algorithm \ref{alg:projcount}:
\begin{equation}
    \label{eq:alt1}
    \Theta_i^M \leftarrow \mathcal{P_M}\Theta_i^C
\end{equation}

\begin{equation}
    \label{eq:alt2}
    \Theta_{i+1}^C \leftarrow \mathcal{P_C}\Theta_i^M
\end{equation}
where $\Theta_0^C$ is an arbitrary starting point (random initialization of the weights).

\begin{algorithm}
    \begin{algorithmic}[1]
    \Require
        $N(\boldsymbol{\Theta};\mathbf{X})$ - neural network,
        $\boldsymbol{\Theta}$ - current network weights,
        $\mathbf{Y} = \{y_1, \ldots, y_n\} \in \mathbb{R}$ - labels,
        $\mathbf{X} = \{ x_1, \ldots, x_n \} \in \mathbb{R}^d$ - Input data,
        $m$ - Number of samples to satisfy count constraint,
        $\delta$ - Tolerance for the count error,
        $\mu$ - Learning rate,
        $\mbox{MaxIter}$ - Maximal number of iterations.
    \Ensure 
         $\hat{\boldsymbol{\Theta}}$ - optimized network weights
    \State $i \leftarrow 0$
    \State $\hat{\boldsymbol{Y}} \leftarrow N(\boldsymbol{\Theta,X})$ \# Compute predicted labels
    \State $\hat{m} \leftarrow \card_i{\left(\hat{y_i} > y_i \right)}$
      \While{$\left(\vert \hat{m}-m \vert > \delta\right)$ AND $\left(i < \mbox{MaxIter}\right)$}
    \If{$\hat{m}>m$}  ~~~\# Not enough samples passed, increase value
        \State $L \leftarrow \frac{1}{n}\sum_{i=1}^n \hat{y_i}$
    \Else
        \State $L \leftarrow -\frac{1}{n}\sum_{i=1}^n \hat{y_i}$
    \EndIf
  \State $\boldsymbol{\Theta} \leftarrow \boldsymbol{\Theta} - \mu \nabla_\Theta L$ ~~~~~\# Optimize over batch
  \State $i \leftarrow i+1$
    \EndWhile
     \State \Return $\boldsymbol{\Theta}$
    \end{algorithmic}
    \caption{Satisfy count constraint, implementation of $\mathcal{P_C}$ operator}
    \label{alg:projcount}
\end{algorithm}

\begin{proposition}
\label{prop:conv}
Let $\Theta_i^C$ and $\Theta_i^M$ ($i \ge 1$) be a set of points (weights) obtained by a consecutive application of the alternation scheme (Eqs. \ref{eq:alt1} and \ref{eq:alt2}) then the series $\Vert \Theta_i^C - \Theta_i^M \Vert$ converges.
\end{proposition}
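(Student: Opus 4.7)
The plan is to show that the scalar sequence $d_i = \Vert \Theta_i^C - \Theta_i^M \Vert$ is monotonically non-increasing and bounded below by $0$, whence it converges by the monotone convergence theorem. The key ingredient is that both $\mathcal{P_M}$ and $\mathcal{P_C}$ are defined as nearest-point operators (Definitions \ref{def:ProjMin} and \ref{def:ProjComp}), so each application cannot increase the distance to any specific reference point already lying in the target set.

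Concretely, I would first use the defining property of $\mathcal{P_C}$ applied at $\Theta_i^M$: since $\Theta_{i+1}^C = \mathcal{P_C}\Theta_i^M$ is the nearest element of $\mathcal{C}$ to $\Theta_i^M$, and since the previous iterate $\Theta_i^C$ already lies in $\mathcal{C}$, we get
\begin{equation*}
\Vert \Theta_{i+1}^C - \Theta_i^M \Vert \le \Vert \Theta_i^C - \Theta_i^M \Vert = d_i.
\end{equation*}
Next, I would apply the same reasoning to $\mathcal{P_M}$ at $\Theta_{i+1}^C$: since $\Theta_{i+1}^M = \mathcal{P_M}\Theta_{i+1}^C$ is the nearest element of $\mathcal{M}$ to $\Theta_{i+1}^C$, and $\Theta_i^M \in \mathcal{M}$, we obtain
\begin{equation*}
d_{i+1} = \Vert \Theta_{i+1}^C - \Theta_{i+1}^M \Vert \le \Vert \Theta_{i+1}^C - \Theta_i^M \Vert.
\end{equation*}
Chaining the two inequalities gives $d_{i+1} \le d_i$, establishing monotonicity, and since $d_i \ge 0$ for all $i$, convergence of $(d_i)$ follows immediately.

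The main subtlety, rather than the monotonicity argument itself, is whether the idealized definitions of $\mathcal{P_M}$ and $\mathcal{P_C}$ are actually being implemented. The text already flags that in practice the operators are realized by stochastic gradient descent and Algorithm \ref{alg:projcount}, which only approximately return a nearest point in $\mathcal{M}$ or $\mathcal{C}$. For the proposition as stated, however, this is sidestepped by taking the operator definitions at face value; the argument above then holds verbatim and requires no convexity or smoothness assumption on $\mathcal{M}$ or $\mathcal{C}$, only that each operator returns some element of its target set whose distance to the argument is minimal. If one wanted a stronger conclusion (for instance convergence of the iterates $\Theta_i^C$ themselves, or convergence of $d_i$ to zero), additional structure such as convexity of both sets or a nonempty intersection $\mathcal{M}\cap\mathcal{C}$ would be needed, but for mere convergence of the distance series the monotone-plus-bounded argument suffices.
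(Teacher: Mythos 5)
Your proof is correct and follows essentially the same route as the paper's: both establish $\Vert \Theta_i^M - \Theta_{i+1}^C \Vert \le \Vert \Theta_i^M - \Theta_i^C \Vert$ from the nearest-point property of $\mathcal{P_C}$ (using $\Theta_i^C \in \mathcal{C}$), then $\Vert \Theta_{i+1}^C - \Theta_{i+1}^M \Vert \le \Vert \Theta_i^M - \Theta_{i+1}^C \Vert$ from the nearest-point property of $\mathcal{P_M}$ (using $\Theta_i^M \in \mathcal{M}$), and conclude by monotone-plus-bounded convergence. Your added remarks on the idealized versus implemented operators and on what further assumptions a stronger conclusion would require are a sensible supplement but do not change the argument.
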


\begin{proof}
Since $i \ge 1$, then according to Eq. $\ref{eq:alt2}$, $\Theta_i^C \in \mathcal{C}$ (Def. \ref{def:SetComp}). By the definition of $\mathcal{P_M}$, $\Theta_i^M$ is the closest local minima to $\Theta_i^C$ and by the definition of $\mathcal{P_C}$, $\Theta_{i+1}^C$ is the closest valid count-constraint point to $\Theta_i^M$. Since $\Theta_i^C \in \mathcal{C}$ and $\Theta_{i+1}^C \in \mathcal{C}$ is the closest point to $\Theta_i^M$
\begin{equation}
\label{eq:propeq1}
    \Vert \Theta_i^M - \Theta_{i+1}^C \Vert \le \Vert \Theta_i^M - \Theta_i^C \Vert.
\end{equation}
By the definition of $\mathcal{P_M}$, $\Theta_{i+1}^M \in \mathcal{M}$ is the closest local minima to $\Theta_{i+1}^C$. Since $\Theta_i^M \in \mathcal{M}$
\begin{equation}
\label{eq:propeq2}
    \Vert \Theta_{i+1}^C - \Theta_{i+1}^M \Vert \le \Vert \Theta_i^M - \Theta_{i+1}^C \Vert
\end{equation}
Combining Eqs. \ref{eq:propeq1} and \ref{eq:propeq2} gives
\begin{equation*}
    \Vert \Theta_{i+1}^M - \Theta_{i+1}^C \Vert \le \Vert \Theta_i^M - \Theta_i^C \Vert.
\end{equation*}
Since $\Vert \Theta_i^M - \Theta_i^C \Vert$ is monotonically decreasing and bounded it converges, which completes the proof.
\end{proof}
Proposition \ref{prop:conv} states that the distance between a valid count-constraint point and a local minima point is monotonically decreasing and eventually converges. An interesting observation from the proposition is that it tells us where to look for the next minima/valid count-constraint point, which enables to decrease the step size of the SGD proportionally to the distance between the two points. The proposition is illustrated in Fig. \ref{fig:moon_conv}.

Additionally, the following observations infer directly from Proposition \ref{prop:conv}:
\begin{itemize}
    \item Since the distance between a valid count-constraint point and a local minimum converges, then eventually it means (excluding pathological cases of points having \emph{exactly} the same distance) that the algorithm iterates between one local minimum and one valid count-constraint point. Therefore, it converges to a specific local minimum/count-constraint point.
    \item The difference in model's performance between those two points, depends on the distance and the Lipschitz constant of the neural network \cite{fazlyab2019efficient}. So if the distance is small (and hopefully the Lipschitz constant), then stopping in count-constraint  point or in a local minimum should not make a big difference.
\end{itemize}

\begin{figure}
  \centering
    \includegraphics[width=0.7\textwidth]{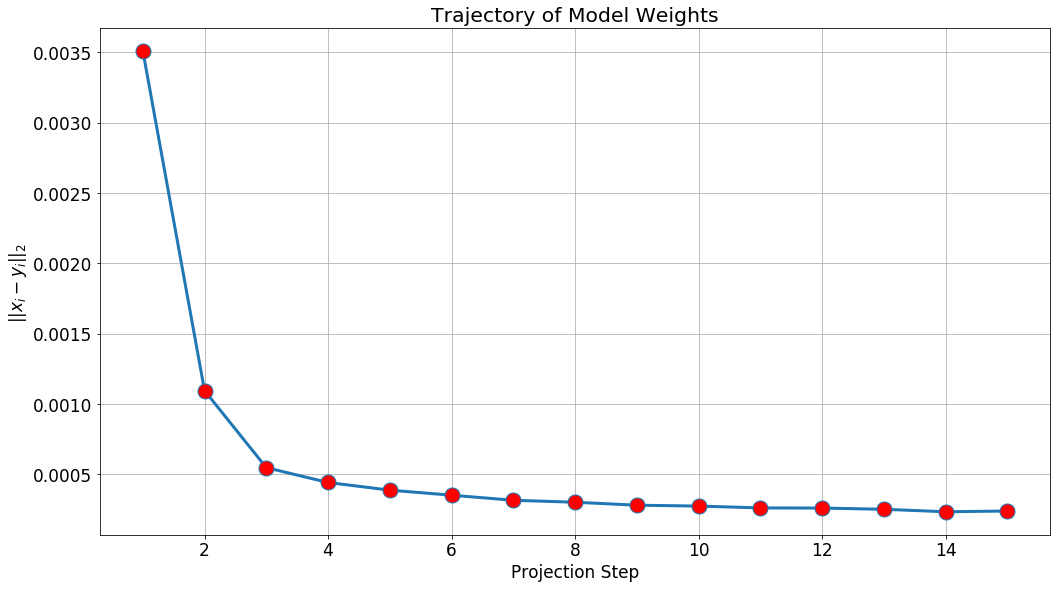}
    \caption{The distance between the weights of a valid count-constraint point to the local minima followed by the application of $\mathcal{P_C}$ to it}
    \label{fig:moon_conv}
\end{figure}

\section{Results}
\subsection{Motorcycle Dataset}
In this subsection, the algorithm was applied to the motorcycle dataset \cite{silverman1985some} to minimize the MSE over several
percentiles: $25\%, 40\%$ and $60\%$. The neural network is a simple two layers fully connected layers, the first hidden layer has 50 neurons with $\tanh$ activation function, following by a layer with 10 neurons followed by a ReLU activation function.

\begin{figure}
  \centering
    \includegraphics[width=0.85\textwidth]{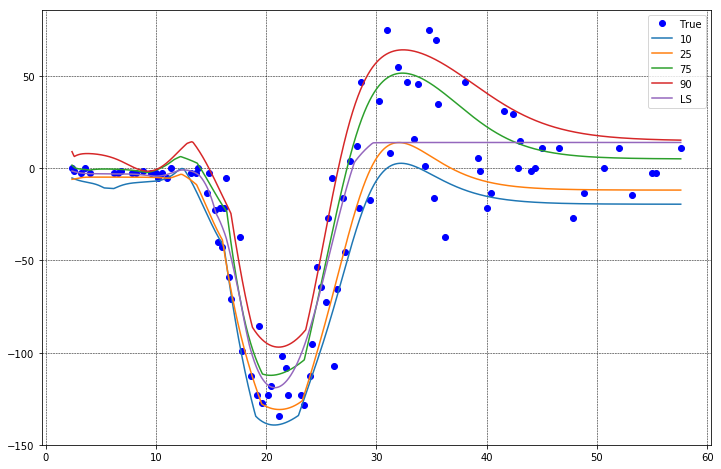}
    \caption{A variety of curves that minimizes the MSE under different quantile constraints, and regular least squares minimization.}
    \label{fig:motorcycle}
\end{figure}

\begin{table}[H]
\centering
\begin{tabular}{|l|l|}
\hline
\multicolumn{1}{|c|}{\textbf{\% Above Data}} & \multicolumn{1}{c|}{\textbf{RMSE}} \\ \hline
10\%                                         & 29.8                               \\ \hline
25\%                                         & 25.1                               \\ \hline
75\%                                         & 23.22                              \\ \hline
90\%                                         & 31.6                               \\ \hline
MSE Minimization                             & 22.9                               \\ \hline 
\end{tabular}\\
\caption{RMSE Error of the curves from Figure \ref{fig:motorcycle} with respect the real data}
\label{table:results}
\end{table}

Table \ref{table:results} shows the error between the model and the real data, i.e. $\mbox{RMSE} = \sqrt{\frac{1}{n}\sum_{i=1}^n \left(\hat{y_i}-y_i\right)^2}$

\section{Conclusion}
This note presented an algorithm that trains a neural network to minimize a general loss function under quantile constraint, which is difficult to implement straightforward since it has no gradients. The note presented the formulation of the problem, an algorithmic description and some theoretical analysis of why the method converges. Finally, we presented results on a small toy dataset, demonstrating the performance of the algorithm. 
\bibliographystyle{unsrt}  
\bibliography{references}  


\end{document}